\documentclass{article}
\usepackage{iclr2027_conference,times}
\usepackage{hyperref}
\usepackage{comment}
\usepackage{url}
\usepackage{bm}
\usepackage{amsfonts,amsmath,amssymb,amsthm,mathtools}
\usepackage{booktabs}
\usepackage[linesnumbered,ruled,vlined,algo2e]{algorithm2e}
\SetAlFnt{\small}

\newif\ifpreprint
\preprinttrue

\ifpreprint
\iclrfinalcopy
\fi

\newcommand{\Real}{\mathbb{R}}

\newcommand{\Matrix}[1]{\bm{\mathrm{#1}}}
\newcommand{\trans}{\intercal}
\newcommand{\diag}[2][]{\mathrm{diag}#1( #2 #1)}
\newcommand{\Norm}[3][]{#1\| #2 #1\|_{\mathrm{#3}}}
\newcommand{\One}{\Matrix{1}}
\newcommand{\Zro}{\Matrix{0}}

\newcommand{\Oh}[2][]{\mathcal{O}#1( #2 #1)}
\newcommand{\uh}{\mathrm{u}}
\newcommand{\fl}[2][]{\mathsf{fl}#1( #2 #1)}
\newcommand{\fpfmt}[1]{\texttt{#1}}

\newcommand{\smax}{\varphi}
\newcommand{\smaxon}[1]{\smax_{#1}}
\newcommand{\Smax}[2][]{\smax #1( #2 #1)}
\newcommand{\Smaxon}[3][]{\smaxon{#2} #1( #3 #1)}

\newcommand{\nrmone}{\eta}
\newcommand{\Nrmone}[2][]{\nrmone #1( #2 #1)}

\newtheorem{proposition}{Proposition}

\title{LampAttention: Look-Ahead Mixed-Precision FlashAttention for Dedicated Accelerators}

\author{Stanislav Budzinskiy\thanks{Corresponding author: \texttt{stanislav.budzinskiy@univie.ac.at}}, Marian Gloser, Tolunay Yilmaz\\
Faculty of Mathematics\\
University of Vienna, Austria\\
\AND
Ying Hong Tham\\
Huawei Heisenberg Research Center, Munich, Germany\\
\And
Yuanyi Lin, Wenyi Fang, Fan Wu\\
Huawei Technologies Co. Ltd\\
\AND
Philipp Petersen\\
Faculty of Mathematics\\
University of Vienna, Austria
}

\begin{document}

\maketitle

\ifpreprint
\fancyhead{}
\renewcommand{\headrulewidth}{0pt}
\fi

\begin{abstract}
While most attention logits can be computed in low precision without degrading numerical stability, current attention kernels fail to exploit this phenomenon.
We introduce a novel hardware-algorithm co-design in the form of mixed-precision FlashAttention.
Our method accumulates key-query products and evaluates their exponentials in 8-bit formats, then adaptively identifies sensitive sub-blocks and recomputes them in 16-bit formats.
We propose the specifications for a dedicated accelerator capable of executing this pipeline efficiently.
Simulated experiments with Qwen3 and Gemma 3 show that rerouting a selective minority of sub-blocks to high precision is sufficient to recover the baseline model performance.
\end{abstract}

\section{Introduction}
Mixed-precision floating-point arithmetic is a fundamental driver of scale in modern deep learning \citep{gupta2015deep, micikevicius2018mixed, kalamkar2019study, micikevicius2022fp8}.
In practice, this efficiency relies entirely on mixed-precision matrix products: the operands are stored in low precision, whereas the output is accumulated in high precision.
As modern AI accelerators natively support these operations, they yield massive throughput gains across both training and inference.
Furthermore, the numerical reliability of mixed-precision matrix products is guaranteed by well-established theoretical round-off error bounds \citep{blanchard2020mixed, abdelfattah2025analysis}.

Beyond isolated matrix products, the overall numerical stability of a neural network depends heavily on how local round-off errors propagate through the entire architecture and interact with its layers, especially the highly nonlinear attention layers of transformers \citep{budzinskiy2025numerical}.
A standard, conservative strategy is to accumulate all matrix products and execute all intermediate computations in 32-bit arithmetic before passing the results down the computational pipeline.
However, recent theoretical advances \citep{el2026mixed, budzinskiy2026lamp} reveal that this approach is overly restrictive and introduce a dynamic alternative: a \emph{look-ahead mixed-precision} (LAMP) framework.
The core idea of LAMP is to maximize the use of low-precision arithmetic by default, using analytical error bounds to selectively trigger high-precision recomputation only where numerical stability is at risk.
These works empirically validate LAMP on feedforward activations and transformer softmax functions, showing that large fractions of the directly preceding matrix products can be accumulated in low precision without degrading end-to-end inference accuracy.

A fundamental limitation of these prior theoretical works is that they are blind to physical hardware realities.
Their LAMP algorithms fail to account for the structural constraints imposed by practical, hardware-aware kernels such as the industry-standard FlashAttention \citep{dao2022flashattention, dao2024flashattention}.

\subsection{Contributions}
We address this gap by explicitly adapting the LAMP framework to FlashAttention.
Specifically, we develop a hardware-algorithm co-design (Sections~\ref{sec:lamp_online} and~\ref{sec:lampattn}):
\begin{itemize}
    \item \emph{LampAttention}, a LAMP modification of FlashAttention that identifies numerically destabilizing logits in two stages, guided by running maxima and an analytical error bound;
    \item a specification for a hypothetical dedicated accelerator capable of executing LampAttention efficiently, featuring native 8-bit accumulation and LUT-based evaluation of exponentials.
\end{itemize}
\emph{LampAttention is distinctly not a quantization technique}.
Rather than compressing model weights or the input key-query-value operands, it exclusively minimizes the precision of intermediate calculations inside the attention kernel: the matrix accumulations and the softmax exponentials.
Therefore, the algorithm operates independently of, and seamlessly alongside, external optimizations such as operand quantization or rotary positional encodings \citep{su2024roformer}.

Targeting the massive computational demands of LLM deployment, we evaluate LampAttention for inference across two modern LLM architectures, Qwen3 \citep{yang2025qwen3} and Gemma 3 \citep{kamath2025gemma}, measuring its impact on perplexity and downstream tasks (Section~\ref{sec:experiments}).
Because commercially available hardware lacks native support for 8-bit accumulation, we simulate the arithmetic to guarantee numerical fidelity.
Consequently, rather than reporting wall-clock timings, we quantify the volume of high-precision recomputations against task metrics, mapping the efficiency-accuracy trade-off attainable by future accelerators.

\subsection{Related work}
\paragraph{Transformers.} The attention mechanism was originally introduced in \cite{vaswani2017attention} in the context of machine translation and forms the basis of all modern transformers.
These have achieved widespread success in a variety of natural language tasks, with prominent examples including BERT \citep{devlin2019bert}, GPT \citep{brown2020language}, PaLM \citep{chowdhery2023palm}, Llama \citep{touvron2023llama}, Qwen \citep{yang2025qwen3}, and DeepSeek \citep{liu2024deepseek}.
Transformers and related attention-based architectures have also enabled advances in scientific and mathematical applications, including accurate protein structure prediction with AlphaFold 2 \citep{jumper2021alphafold} and Olympiad-level geometry problem solving with AlphaGeometry \citep{trinh2024solving}.

\paragraph{Efficient attention.}
Despite their success, transformers are bound by a fundamental bottleneck: the compute and memory costs of standard attention scale quadratically with sequence length.
The family of FlashAttention algorithms \citep{dao2022flashattention, dao2024flashattention, shah2024flashattention, zadouri2026flashattention} achieves linear memory cost by reorganizing the standard formulation in an I/O-aware manner.
Sparse attention \cite{beltagy2020longformer, kitaev2020reformer} uses local sliding windows or fixed stride patterns to discard specific token interactions.
Linear \citep{katharopoulos2020transformers} and log-linear \citep{guo2026log} variants of attention remove the softmax function.
KV-caching \citep{dai2019transformer} prevents redundant recomputation of past tokens during generation, while methods such as grouped-query attention \citep{ainslie2023gqa} reduce the memory footprint of the KV-cache.

\paragraph{Quantization.}
To alleviate disk-storage and runtime memory requirements, quantization reduces the precision of model weights and activations.
Weight-only quantization \citep{frantar2023gptq, lin2024awq} compresses static model parameters.
Weight-activation quantization \citep{dettmers2022gpt3, xiao2023smoothquant} compresses both weights and activations, preparing them to become mixed-precision matrix-product operands.
\emph{LampAttention performs neither.} It exclusively controls the precision of calculations themselves.

\paragraph{Mixed-precision arithmetic.}
Frameworks for mixed-precision deep learning \citep{micikevicius2018mixed, kalamkar2019study} accelerate training and inference by allocating different numerical formats to different components of the computational pipeline.
This assignment is fundamentally inter-operational and homogeneous: both operands of a mixed-precision matrix product are quantized to one precision\footnote{Heterogeneous quantization of operands was proposed in \cite{dettmers2022gpt3}.} and its entire output is accumulated in another precision; all of the softmax exponentials are computed in the same precision.
In contrast, \emph{LampAttention introduces intra-operational, heterogeneous mixed precision}.

\section{Look-ahead mixed-precision computation of online softmax}
\label{sec:lamp_online}
The principal enabling feature of FlashAttention is the \emph{online} computation of the softmax function
\begin{equation*}
    \Smax{\Matrix{x}} = \frac{1}{\sum_{i = 1}^{n} \exp(x_i)} \begin{bmatrix}
        \exp(x_1) & \cdots & \exp(x_n)
    \end{bmatrix}^\trans, \quad \Matrix{x} \in \Real^n,
\end{equation*}
whereby an intermediate result is updated with new logits $\Matrix{y} \in \Real^m$.
Let $\smaxon{\Matrix{x}} : \Real^{m} \to \Real^{m + n}$ be the corresponding update function.
It is computed as follows.
The algorithm keeps track of
\begin{equation*}
    \mu \gets \max_{1 \leq i \leq n} x_i, \quad \omega \gets \sum_{i = 1}^{n} \exp(x_i - \mu), \quad \Matrix{s} \gets \Smax{\Matrix{x}},
\end{equation*}
and updates their running values according to
\begin{gather*}
    \hat{\mu} \gets \max \Big\{ \mu, \max_{1 \leq i \leq m} y_i \Big\}, \quad \hat{\alpha} \gets \exp(\mu - \hat{\mu}) \omega, \quad \hat{\Matrix{e}} \gets \begin{bmatrix}
        \exp(y_1 - \hat{\mu}) & \cdots & \exp(y_m - \hat{\mu})
    \end{bmatrix}^\trans, \\
    \hat{\omega} \gets \hat{\alpha} + \Norm{\hat{\Matrix{e}}}{1}, \quad \hat{\Matrix{s}} \gets \frac{1}{\hat{\omega}} \begin{bmatrix}
        \hat{\alpha} \Matrix{s}^\trans & \hat{\Matrix{e}}^\trans
    \end{bmatrix}^\trans = \Smaxon{\Matrix{x}}{\Matrix{y}}.
\end{gather*}

\subsection{Floating-point normalization}
To analyze the effects of rounding errors on the computation of $\smaxon{\Matrix{x}}$, we focus on its compositional structure and introduce the $\ell_1$ normalization function $\Nrmone{\Matrix{z}} = \Matrix{z} / \Norm{\Matrix{z}}{1}$, so that
\begin{equation*}
    \Smaxon{\Matrix{x}}{\Matrix{y}} = \Nrmone{\Matrix{z}}, \quad \Matrix{z} = \begin{bmatrix}
        \hat{\alpha} \Matrix{s}^\trans & \hat{\Matrix{e}}^\trans
    \end{bmatrix}^\trans \in \Real^{n + m}.
\end{equation*}
Then the componentwise error of floating-point (FP) evaluation of $\smaxon{\Matrix{x}}$ can be bounded by\footnote{We denote by $\fl{\cdot}$ the FP evaluation of the entire expression, e.g., $\fl{\Nrmone{\Matrix{z}}} := \fl{\Nrmone{\fl{\Matrix{z}}}}$.}
\begin{equation*}
    |\fl{\Nrmone{\Matrix{z}}} - \Nrmone{\Matrix{z}}| \leq |\fl{\Nrmone{\Matrix{z}}} - \Nrmone{\fl{\Matrix{z}}}| + |\Nrmone{\fl{\Matrix{z}}} - \Nrmone{\Matrix{z}}|,
\end{equation*}
where the first term is the rounding error of normalization at the evaluated value of $\Matrix{z}$ and the second term is the discrepancy between the exactly normalized exact and evaluated values of $\Matrix{z}$.
Denoting by $\uh_{\nrmone}$ the FP precision used to evaluate $\nrmone$, we get with standard techniques \citep{higham2002accuracy}
\begin{equation}
\label{eq:first_term_bound}
    |\fl{\Nrmone{\Matrix{z}}} - \Nrmone{\Matrix{z}}| \leq \gamma_{\nrmone, m+1} \Nrmone{\Matrix{z}} + (1 + \gamma_{\nrmone, m+1}) |\Nrmone{\fl{\Matrix{z}}} - \Nrmone{\Matrix{z}}|, \quad \gamma_{\nrmone, m+1} = \frac{(m+1) \uh_{\nrmone}}{1 - (m+1) \uh_{\nrmone}}.
\end{equation}
The bound is now determined by how the $\ell_1$ normalization function $\nrmone$ propagates the rounding errors in the computation of the exponentials and their arguments.

\subsection{Floating-point inputs of normalization}
Let us now consider the evaluation of $\Matrix{z}$, taking into account the rounding errors due to FP evaluation of logits $\Matrix{y}$.
Specifically, suppose that $\Delta y_j = |\fl{y_j} - y_j| = \Oh{\uh_{y_j}}$ for $1 \leq j \leq m$, and denote $\Delta \hat{\mu} = |\fl{\hat{\mu}} - \hat{\mu}|$.
Then standard worst-case rounding error analysis yields
\begin{gather*}
    |\fl{\hat{\alpha} s_i} - \hat{\alpha} s_i| \leq \hat{\alpha} s_i \Big[ (1 + \gamma_{\hat{\alpha}, 3}) \exp\Big( \uh_{\hat{\alpha}} (\hat{\mu} - \mu + \Delta \hat{\mu}) + \Delta \hat{\mu} \Big) - 1 \Big] = \hat{\alpha} s_i w_0, \\
    |\fl{\hat{e}_j} - \hat{e}_j| \leq \hat{e}_j \Big[ (1 + \uh_{\hat{e}_j}) \exp\Big( \uh_{\hat{e}_j} (\hat{\mu} - y_j + \Delta \hat{\mu} + \Delta y_j) + \Delta \hat{\mu} + \Delta y_j \Big) - 1 \Big] = \hat{e}_j w_j.
\end{gather*}
It follows from Taylor's expansion that
\begin{equation}
\label{eq:second_term_bound}
    |\Nrmone{\fl{\Matrix{z}}} - \Nrmone{\Matrix{z}}| \leq |\Matrix{J}_{\nrmone}(\Matrix{z}) \diag{\Matrix{z}}| \hat{\Matrix{w}} + \Oh{\Norm{\hat{\Matrix{w}}}{\infty}^2}, \quad \hat{\Matrix{w}} = \begin{bmatrix}
        w_0 \One_n^\trans & \Matrix{w}^\trans
    \end{bmatrix}^\trans \in \Real^{n + m},
\end{equation}
where $\Matrix{J}_{\nrmone}(\Matrix{z}) \in \Real^{(n+m) \times (n+m)}$ is the Jacobian and $\hat{\Matrix{w}}$ describes the relative rounding errors of $\fl{\Matrix{z}}$, where distinct precisions are used for its components.

It remains to bound $\Delta \hat{\mu}$, for which we introduce a reasonable assumption that rounding errors in the evaluation of $\Matrix{y}$ do not switch its maximizer, i.e., $y_c = \max_j y_j$ and $\fl{y_c} = \max_j \fl{y_j}$.
Then
\begin{equation*}
    \Delta \hat{\mu} \leq \begin{cases}
        0, & \mu \geq \max\{ y_c, \fl{y_c} \}, \\
        \Delta y_c, & \text{otherwise}.
    \end{cases}
\end{equation*}

We make another structural decision and assume that $\hat{\alpha}$ is computed in high precision.
This choice imposes negligible computational overhead since a single exponential is evaluated, and in the limiting case of $\uh_{\hat{\alpha}} = 0$, which we shall adopt, it yields $w_0 = \exp(\Delta \hat{\mu}) - 1$.

\subsection{Propagation of mixed-precision rounding errors}
To simplify presentation, and without loss of generality, we let $\uh_j = \uh_{y_j} = \uh_{\hat{e}_j}$ for each $1 \leq j \leq m$.
Suppose that these precisions are divided into two categories: 
\begin{equation*}
    \uh_j = \begin{cases}
        \uh, & j \not\in \Omega, \\
        \epsilon\uh, & j \in \Omega,
    \end{cases} \quad \Omega \subseteq \{ 1, \ldots, m \}, \quad \epsilon \ll 1.
\end{equation*}
That is, the exponential $\hat{e}_j$ and the logit $y_j$ are evaluated in high precision when $j \in \Omega$ and in lower precision when $j \not\in \Omega$.
Henceforth, we resort to the limiting case of $\epsilon = 0$ to streamline the analysis.

Consider a binary vector $\Matrix{g} \in \{ 0, 1 \}^{m}$ that vanishes exactly on $\Omega$.
We can represent $\hat{\Matrix{w}}$ as
\begin{equation}
\label{eq:h_via_g}
    \hat{\Matrix{w}} = \diag{\Matrix{h}} \hat{\Matrix{w}}, \quad \Matrix{h} = \begin{cases}
        \begin{bmatrix}
            \Zro_n^\trans & \Matrix{g}^\trans    
        \end{bmatrix}^\trans, & \mu \geq \max\{ y_c, \fl{y_c} \}~\text{or}~c \in \Omega, \\
        \One_{n+m}, & \text{otherwise}.
    \end{cases}
\end{equation}
The binary vector $\Matrix{h} \in \{ 0, 1 \}^{n + m}$ specifies how the rounding errors propagate through $\nrmone$.

\begin{proposition}
\label{proposition:error_propagation}
Let $\Matrix{z} \in \Real_{+}^{n + m}$, $\hat{\Matrix{w}} \in \Real^{n + m}$, and $\Matrix{h} \in \{ 0, 1 \}^{n + m}$.
For every $1 \leq l \leq n + m$,
\begin{equation*}
    |\Matrix{e}_l^\trans \Matrix{J}_{\nrmone}(\Matrix{z}) \diag{\Matrix{z}} \diag{\Matrix{h}}| \hat{\Matrix{w}} \leq \Nrmone{\Matrix{z}}_l \Big[ \Matrix{h}^\trans \Nrmone{\Matrix{z}} + h_l \Big( 1 - 2 \Nrmone{\Matrix{z}}_l \Big) \Big] \Norm{\hat{\Matrix{w}}}{\infty}.
\end{equation*}
\end{proposition}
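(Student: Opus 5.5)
We compute the Jacobian of $\nrmone$ explicitly and then bound the row entries one at a time. On $\Real_+^{n+m}$ with $\Matrix{z}\neq\Zro$ we have $\Norm{\Matrix{z}}{1} = \One^\trans\Matrix{z}$. Hence $\nrmone$ is differentiable there, with
\begin{equation*}
    \Matrix{J}_{\nrmone}(\Matrix{z}) = \frac{1}{\Norm{\Matrix{z}}{1}} \Big( \Matrix{I} - \Nrmone{\Matrix{z}} \One^\trans \Big).
\end{equation*}
Multiplying on the right by $\diag{\Matrix{z}}$ and writing $\Matrix{p} = \Nrmone{\Matrix{z}}$, the $(l,k)$ entry of $\Matrix{J}_{\nrmone}(\Matrix{z})\diag{\Matrix{z}}$ equals
\begin{equation*}
    \frac{z_k}{\Norm{\Matrix{z}}{1}} (\delta_{lk} - p_l) = p_k(\delta_{lk} - p_l).
\end{equation*}
The normalization factor thus cancels, and every entry is expressed through $\Matrix{p}$ alone. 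Multiplying further by $\diag{\Matrix{h}}$ gives the row vector with entries $h_k p_k(\delta_{lk}-p_l)$.

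Next we take absolute values entrywise and apply the row to $\hat{\Matrix{w}}$. We bound each $|\hat w_k|$ by $\Norm{\hat{\Matrix{w}}}{\infty}$; this silently uses the convention that $\hat{\Matrix{w}}$ is entrywise nonnegative, as it is in the application, or otherwise we replace $\hat{\Matrix{w}}$ by $|\hat{\Matrix{w}}|$. This gives the bound
\begin{equation*}
    \Norm{\hat{\Matrix{w}}}{\infty} \sum_{k} h_k p_k |\delta_{lk} - p_l|.
\end{equation*}
We split the sum into $k\neq l$ and $k=l$. For $k \neq l$, the term is $h_k p_k p_l$, since $p_l \ge 0$. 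For $k=l$, the term is $h_l p_l (1-p_l)$, since $p_l \le 1$. The sum therefore equals
\begin{equation*}
    p_l\Big[\sum_{k\neq l} h_k p_k + h_l(1-p_l)\Big] = p_l\Big[\Matrix{h}^\trans\Matrix{p} - h_l p_l + h_l(1-p_l)\Big] = p_l\Big[\Matrix{h}^\trans\Matrix{p} + h_l(1-2p_l)\Big],
\end{equation*}
which is exactly the claimed right-hand side.

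No step here is a real obstacle. The only points needing care are the sign handling $|\delta_{lk}-p_l|$, which relies on $0\le p_l\le 1$ and hence on the positivity of $\Matrix{z}$, and the implicit nonnegativity of $\hat{\Matrix{w}}$ in the statement. I would state the latter explicitly, or read the product as being taken with $|\hat{\Matrix{w}}|$.
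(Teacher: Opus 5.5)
Your proof is correct and follows the paper's argument. The paper likewise writes $\Matrix{J}_{\nrmone}(\Matrix{z})\diag{\Matrix{z}}\diag{\Matrix{h}} = (\Matrix{I} - \Nrmone{\Matrix{z}}\One^\trans)\diag{\Nrmone{\Matrix{z}}}\diag{\Matrix{h}}$, uses $0<\Nrmone{\Matrix{z}}_i<1$ to fix the signs of the row entries, and sums their absolute values against $\Norm{\hat{\Matrix{w}}}{\infty}$. Your caveat about the sign of $\hat{\Matrix{w}}$ is unnecessary: $h_k p_k|\delta_{lk}-p_l|\,\hat w_k \leq h_k p_k|\delta_{lk}-p_l|\,\Norm{\hat{\Matrix{w}}}{\infty}$ holds for any real $\hat w_k$.
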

\begin{proof}
See Appendix~\ref{appendix:proof}.
\end{proof}

Let us combine Proposition~\ref{proposition:error_propagation} with error bounds \eqref{eq:first_term_bound} and \eqref{eq:second_term_bound}.
Summing over $1 \leq l \leq n + m$, 
\begin{equation}
\label{eq:mixed_precision_error}
    \Norm{\fl{\Nrmone{\Matrix{z}}} - \Nrmone{\Matrix{z}}}{1} \leq \gamma_{\nrmone, m+1} + 2 (1 + \gamma_{\nrmone, m+1}) \Big[ \Matrix{h}^\trans \diag[\big]{\One_{n + m} - \Nrmone{\Matrix{z}}} \Nrmone{\Matrix{z}} \Big] \Norm{\hat{\Matrix{w}}}{\infty} + \Oh{\Norm{\hat{\Matrix{w}}}{\infty}^2}.
\end{equation}
This bound controls the probability mass shift caused by the FP evaluation of softmax and is maximized when $\Matrix{h} = \One_{n + m}$, i.e., when the running maximum $\hat{\mu}$ is set to a low-precision logit.

\subsection{Look-ahead distribution of precisions}
Our error bound \eqref{eq:mixed_precision_error} represents a trade-off between cheap, low-precision evaluation of softmax and its numerical stability: maximum efficiency is achieved with $\Matrix{g} = \One_m$, whereas $\Matrix{g} = \Zro_m$ guarantees unconditional numerical stability.
This setting naturally lends itself to LAMP analysis \citep{el2026mixed, budzinskiy2026lamp}.
Specifically, we can optimize
\begin{equation}
\label{eq:lamp}
    \One_m^\trans \Matrix{g} \to \max \quad \text{s.t.} \quad \Matrix{h}^\trans \diag[\big]{\One_{n + m} - \Nrmone{\Matrix{z}}} \Nrmone{\Matrix{z}} \leq \tau,
\end{equation}
where $0 \leq \tau < 1$ is a small threshold and $\Matrix{h}$ is derived from $\Matrix{g}$ via \eqref{eq:h_via_g}.
However, we do not have access to the exact values of $\Nrmone{\Matrix{z}}$ in practice.
Additionally, the explicit storage of the first $n$ softmax probabilities would render any method inefficient in long-context scenarios.

A two-step procedure can overcome these issues.
First, we look at the logits $\fl{\Matrix{y}}$ evaluated in low precision and identify their maximizer $\fl{y_c}$.
If $\mu > \fl{y_c} + |\mu|\delta$ with a safety margin $\delta \geq 0$ then we proceed; otherwise, $\fl{y_c}$ is recomputed in high precision.
In either case, we end up in the first branch of \eqref{eq:h_via_g},\footnote{Technically, this would also require $|\mu|\delta$ to be greater than the largest $\Delta y_j$. While it is possible to use worst-case upper bounds on $\Delta y_j$, they tend to be overly pessimistic, so we prefer to use a tunable margin $\delta$.} and therefore the optimization problem \eqref{eq:lamp} relies entirely on the newly added softmax probabilities.
Denote by $\Matrix{p} \in \Real^m$ these last $m$ components of $\fl{\Nrmone{\Matrix{z}}}$ evaluated according to the precisions chosen for $\fl{\Matrix{y}}$.
Then we can reformulate \eqref{eq:lamp} equivalently as
\begin{equation}
\label{eq:lamp_final}
    \One_m^\trans \Matrix{g} \to \max \quad \text{s.t.} \quad \Matrix{g}^\trans \diag[\big]{\One_{m} - \Matrix{p}} \Matrix{p} \leq \tau, \quad g_c = 0~~\text{if}~~\mu \leq \fl{y_c} + |\mu| \delta.
\end{equation}
This is a knapsack problem with uniform values, and hence its optimal solution can be obtained with a greedy algorithm: select the smallest entries of $\diag[\big]{\One_{m} - \Matrix{p}} \Matrix{p}$ until the threshold is surpassed, subject to the constraint on $g_c$.
The solution $\Matrix{g}$ determines the distribution of precisions across logits, and we reevaluate them, together with their exponentials, in higher precision accordingly.
Finally, we recompute the $\ell_1$ normalization for its updated inputs.

By restricting the binary vector $\Matrix{h}$ in \eqref{eq:h_via_g} strictly to the first branch, we ensure that the LAMP evaluation of online softmax can be seamlessly integrated into FlashAttention.
Indeed, the method relies only on the running maximum and the running normalization constant, completely avoiding the need to materialize the historical probability distribution.

\section{Look-ahead mixed-precision FlashAttention}
\label{sec:lampattn}
The framework developed in Section~\ref{sec:lamp_online} deviates from the practice of FlashAttention in three aspects, two of which stem from the block-based nature of efficient matrix multipliers in modern AI accelerators.
First, instead of looking at a single vector of logits $\Matrix{y} \in \Real^m$ at a time, FlashAttention processes an entire block of queries simultaneously, leading to a matrix of logits $\Matrix{Y} \in \Real^{m \times b_q}$.
Second, assuming that the ``atomic'' matrix product is $(b_{k} \times b) \times (b \times b_{q}) \to (b_{k} \times b_{q})$, the number of new keys per query is $m = b_k N_k$ and the (re)computation of logits is strictly blockwise.
Third, the output of attention is scaled by the normalization constant only at the very end of the computational pipeline, i.e., FlashAttention keeps the intermediate exponentials unnormalized.

\subsection{Block modification of LAMP computation of online softmax}
Let $\Matrix{\mu} \in \Real^{b_q}$ be the vector of running maxima for each query and denote by $\Matrix{C} \in \{ 1, \ldots, b_k \}^{N_k \times b_q}$ the matrix of maximum-logit indices for each sub-block of low-precision $\fl{\Matrix{Y}}$.
Consider
\begin{equation*}
    T_{\theta} = \max_{1 \leq j \leq b_q} \big\{ \fl{Y_{i,j}} - \mu_{j} \big\}, \quad i = C_{\theta, j} + (\theta - 1) b_k, \quad 1 \leq \theta \leq N_k.
\end{equation*}
This quantity represents the ``threat'' caused by the $\theta$th sub-block, and we assume that the sub-blocks are sorted so that $T_1 \geq \cdots \geq T_{N_k}$.
Let $\hat{\Matrix{\mu}} = \Matrix{\mu}$ and process the sub-blocks sequentially.
If
\begin{equation}
\label{eq:safety_margin}
    \hat{\mu}_j > \fl{Y_{i,j}} + |\hat{\mu}_j| \delta, \quad i = C_{\theta, j} + (\theta - 1) b_k \quad \text{for all} \quad 1 \leq j \leq b_q,
\end{equation}
the $\theta$th sub-block of $\fl{\Matrix{Y}}$ remains in low precision; otherwise, it is recomputed in high precision and the running maxima $\hat{\Matrix{\mu}}$ are updated.
Denote by $\Theta \subseteq \{ 1, \ldots, N_k \}$ the set of indices of recomputed sub-blocks.
The role of sorting according to $T_{\theta}$ is to attempt to reduce the cardinality of $\Theta$.

Next, we need to modify the LAMP problem \eqref{eq:lamp_final}.
For every query $1 \leq j \leq b_q$, denote by $\hat{\Matrix{e}}_j \in \Real^{b_k N_k}$ the vector of unnormalized shifted exponentials and by $\hat{\omega}_j \in \Real$ the updated running normalization constant.
Let $\Matrix{g} \in \{ 0, 1 \}^{N_k}$ encode the precisions used for the sub-blocks, so that $g_\theta = 0$ whenever the $\theta$th sub-block requires high precision.
Then we propose to solve
\begin{gather}
\label{eq:lamp_block}
    \One_{N_k}^\trans \Matrix{g} \to \max \quad \text{s.t.} \quad \xi_j(\Matrix{g}) \leq \tau \hat{\omega}_j^2~~\text{for all}~~1 \leq j \leq b_q, \quad g_\theta = 0~~\text{for}~~\theta \in \Theta, \\
    \xi_j(\Matrix{g}) = (\Matrix{g} \otimes \One_{b_k})^\trans \diag[\big]{\hat{\omega}_j \One_{b_k N_k} - \hat{\Matrix{e}}_j} \hat{\Matrix{e}}_j.\nonumber
\end{gather}
Here, $\otimes$ is the Kronecker product.
Unlike \eqref{eq:lamp_final}, this modified optimization problem is a multidimensional knapsack problem \citep{kellerer2004knapsack} and, in general, does not admit a greedy solution.
To solve it, we can rely on the number of sub-blocks $N_k$ being small and compare all feasible configurations of $\Matrix{g}$ in the descending order of $\One_{N_k}^\trans \Matrix{g}$.
Upon finding an optimal precision distribution, we recompute the identified sub-blocks, their exponentials, and the running normalization constant.

\subsection{LampAttention}
We integrate the LAMP framework into the FlashAttention-2 kernel \citep{dao2024flashattention}, omitting the warp-group instructions and specialization of FlashAttention-3 \citep{shah2024flashattention} and the asynchronous computations with dedicated memory levels of FlashAttention-4 \citep{zadouri2026flashattention}.
We assume a standard accelerator architecture characterized by (i)~a memory hierarchy that distinguishes between HBM, SRAM, and registers; (ii)~a SIMT execution model partitioned into thread blocks and further subdivided into warps.
The pseudocode of our \emph{LampAttention} algorithm is listed in Algorithm~\ref{alg:lampattn}.

The mixed-precision sections of Algorithm~\ref{alg:lampattn} are on lines 10--20 and 22--26, while the rest is vanilla FlashAttention-2.
Note that (causal) masking and RoPE \citep{su2024roformer} are seamlessly supported.

\begin{algorithm2e}[ht]
\caption{LampAttention}
\label{alg:lampattn}
\DontPrintSemicolon

\KwIn{$\Matrix{Q}_{full}, \Matrix{K}_{full}, \Matrix{V}_{full} \in \Real^{d \times L}$ stored in HBM for a single attention head and a single sequence of $L$ tokens;
query block size $b_q$ and key/value block size $b_k$ such that $L = b_q N_q M_q = b_k N_k M_k$; 
safety margin $\delta > 0$ of condition \eqref{eq:safety_margin};
threshold $0 \leq \tau < 1$ of block-LAMP problem \eqref{eq:lamp_block}}

\KwOut{attention $\Matrix{O}_{full} \in \Real^{d \times L}$}

\ForEach{$\upsilon_q = 1, \ldots, M_{q}$ in parallel}{
    \tcc{single thread block with its SRAM section}
    
    load $\upsilon_q$th tile $[\Matrix{Q}_1~\cdots~\Matrix{Q}_{N_q}] \in \Real^{d \times b_q N_q}$ into SRAM\;
    
    initialize output $\Matrix{O} = [\Matrix{O}_1~\cdots~\Matrix{O}_{N_q}] \gets \Zro_{d \times b_q N_q}$\;

    initialize running quantities $\Matrix{\mu} = [\Matrix{\mu}_1^\trans~\cdots~\Matrix{\mu}_{N_q}^\trans]^\trans \gets (-\infty) \One_{b_q N_q}$ and $\Matrix{\omega} = [\Matrix{\omega}_1^\trans~\cdots~\Matrix{\omega}_{N_q}^\trans]^\trans \gets \Zro_{b_q N_q}$\;
    
    \For{$\upsilon_k = 1, \ldots, M_{k}$}{
        load $\upsilon_{k}$th tiles $\Matrix{K} = [\Matrix{K}_1~\cdots~\Matrix{K}_{N_k}], \Matrix{V} = [\Matrix{V}_1~\cdots~\Matrix{V}_{N_k}] \in \Real^{d \times b_k N_k}$ into SRAM\;

        initialize $\hat{\Matrix{\mu}} \gets \Matrix{\mu}$\;

        \ForEach{$w = 1, \ldots, N_q$ in parallel}{
            \tcc{single warp with its SRAM subsection and registers}

            accumulate $\Matrix{Y} \gets \tfrac{1}{\sqrt{d}} \Matrix{K}^\trans \Matrix{Q}_w \in \Real^{b_k N_k \times b_q}$ in low precision\;

            find maximum-logit indices $\Matrix{C} \in \{ 1, \ldots, b_k \}^{N_k \times b_q}$ for $\Matrix{Y}$ and compute $\Matrix{T} \in \Real^{N_k}$\;

            sort $(\theta_{1}, \ldots, \theta_{N_k})$ so that $T_{\theta_{1}} \geq \cdots \geq T_{\theta_{N_k}}$ and initialize $\Theta \gets \varnothing$\;

            \For{$\theta = \theta_{1}, \ldots, \theta_{N_k}$}{
                \If{safety-margin condition \eqref{eq:safety_margin} based on $\hat{\Matrix{\mu}}_w$ fails for $\theta$th sub-block of $\Matrix{Y}$}{
                    accumulate $\Matrix{Y}_\theta \gets \tfrac{1}{\sqrt{d}} \Matrix{K}_\theta^\trans \Matrix{Q}_w \in \Real^{b_k \times b_q}$ in high precision and update $\Theta \gets \Theta \cup \{ \theta \}$\;

                    update $\hat{\Matrix{\mu}}_w \gets \max \{ \hat{\Matrix{\mu}}_w, \mathrm{colmax}(\Matrix{Y}_\theta) \}$\;   
                }
            }

            \For{$\theta = 1, \ldots, N_k$}{
                \uIf{$\theta \in \Theta$}{
                    compute in-place $\hat{\Matrix{E}}_\theta \gets \exp(\Matrix{Y}_\theta - \One_{b_k} \hat{\Matrix{\mu}}_w^\trans)$ in high precision\;
                }\Else{
                    compute in-place $\hat{\Matrix{E}}_\theta \gets \exp(\Matrix{Y}_\theta - \One_{b_k} \hat{\Matrix{\mu}}_w^\trans)$ in low precision\;
                }
            }
            
            compute $\hat{\Matrix{\lambda}} \gets \exp(\Matrix{\mu}_w - \hat{\Matrix{\mu}}_w)$, and $\hat{\Matrix{\alpha}} \gets \diag{\hat{\Matrix{\lambda}}} \Matrix{\omega}_w$, and $\hat{\Matrix{\omega}} \gets \hat{\Matrix{\alpha}} + [\Norm{\hat{\Matrix{e}}_1}{1}~\cdots~\Norm{\hat{\Matrix{e}}_{b_q}}{1}]^\trans$\;

            obtain solution $\Matrix{g} \in \{ 0, 1 \}^{N_k}$ of block-LAMP problem \eqref{eq:lamp_block}\;

            \For{$\theta = 1, \ldots, N_k$}{
                \If{$g_\theta = 0$ and $\theta \not\in \Theta$}{
                    accumulate $\Matrix{Y}_\theta \gets \tfrac{1}{\sqrt{d}} \Matrix{K}_\theta^\trans \Matrix{Q}_w \in \Real^{b_k \times b_q}$ in high precision\;

                    compute in-place $\hat{\Matrix{E}}_\theta \gets \exp(\Matrix{Y}_\theta - \One_{b_k} \hat{\Matrix{\mu}}_w^\trans)$ in high precision\;
                }
            }

            update $\Matrix{\mu}_w \gets \hat{\Matrix{\mu}}_w$ and $\Matrix{\omega}_w \gets \hat{\Matrix{\alpha}} + [\Norm{\hat{\Matrix{e}}_1}{1}~\cdots~\Norm{\hat{\Matrix{e}}_{b_q}}{1}]^\trans$\;

            update $\Matrix{O}_w \gets \Matrix{O}_w \diag{\hat{\Matrix{\lambda}}} + \Matrix{V} \hat{\Matrix{E}}$\;
        }
    }

    scale $\Matrix{O} \gets \Matrix{O} \diag{\Matrix{\omega}}^{-1}$ and write into $\upsilon_q$th submatrix of $\Matrix{O}_{full}$ in HBM\;
}
\end{algorithm2e}

\subsection{Hardware-algorithm co-design}
In this section, we map the pseudocode of Algorithm~\ref{alg:lampattn} to the hardware of a hypothetical dedicated AI accelerator that would be able to execute it efficiently.

\paragraph{Floating-point formats.}
We propose the 8-bit \fpfmt{e4m3} format for the low-precision accumulation of key-query matrix products.
However, this choice poses a strict upper limit of 448 on the maximum allowed absolute value of a pre-softmax logit to avoid overflow.
To guarantee this bound, we require that a transformer employ headwise QK-Norm with moderate gain weights, e.g., as in Qwen3 \citep{yang2025qwen3} and Gemma 3 \citep{kamath2025gemma} models. 
For a typical head dimension of $d = 128$, the maximum attainable value of pre-softmax logit is $8\sqrt{2} \times \mathrm{maxgain}^2$.
Thus, as long as the $1/\sqrt{d}$ scaling is applied before the matrix product is accumulated, the condition $\mathrm{maxgain} \leq 6.29$ prevents overflow during \fpfmt{e4m3} accumulation.
To execute LampAttention efficiently, a dedicated accelerator needs to support native packed \fpfmt{e4m3} accumulation, whereby 4 accumulators occupy a single 32-bit register.
The swamping due to \fpfmt{e4m3} accumulation is typically benign for Algorithm~\ref{alg:lampattn}: it primarily affects ``unimportant'' sub-blocks where the resulting exponential probabilities are vanishingly small, thus contributing no meaningful error to the final attention output.
To mitigate its potential adversary effects, stochastic rounding \citep{croci2022stochastic} could be used instead of round-to-nearest.

Next, the logits are shifted by the running maximum, guaranteeing that the shifted logits are strictly non-positive.
To account for the predetermined sign and the doubled absolute-value limit of 896, we propose to compute the shifted logits in-place using an unsigned \fpfmt{ue5m3} format.

The exponentials of the shifted logits cannot exceed one.
Therefore, we can use the \fpfmt{ue5m3} format again---but with an adapted dynamic range---to compute the exponentials in-place.
Namely, we can choose the exponent bias to map the bit pattern \fpfmt{11111000} to $1.0$, which yields the smallest normalized number $2^{-30}$ and the smallest subnormal number $2^{-33}$.
Crucially, as both the exponentials and their operands are restricted to 8 bits, the evaluation can be physically implemented as a look-up table (LUT) and executed in one clock cycle, bypassing the multi-cycle special-function unit (SFU).

To carry out high-precision recomputations, we recall that the upper bound on the pre-softmax logits remains fixed.
Therefore, we propose using a 16-bit \fpfmt{e4m11} format to accumulate matrix products, maximizing mantissa width, and an unsigned \fpfmt{ue5m11} format to evaluate the shifted exponentials.
In this case, the accelerator must support native packed \fpfmt{e4m11} accumulation with two accumulators per register and must route the evaluation of exponentials through the SFU, since a 16-bit LUT would occupy a prohibitive amount of silicon space.
In addition, while the running maxima of LampAttention require \fpfmt{e4m11} storage, they can be seamlessly rounded to \fpfmt{e4m3} for the low-precision shifts by truncating the lower mantissa bits.

Finally, to compute the attention output, the accelerator must support native matrix products where one operand is stored in the \fpfmt{ue5m3} or \fpfmt{ue5m11} format with the modified exponent bias.

\paragraph{Register layout.}
To execute the inner loop over $w$ in Algorithm~\ref{alg:lampattn}, each warp must have a sufficient number of registers for both the initial 8-bit results and their 16-bit refinements.
A ``safe'' layout, which aligns perfectly with the pseudocode, is to reserve enough registers to store all $N_k$ sub-blocks of $\Matrix{Y}$ in high precision.
While this layout accommodates the worst-case scenario, it allows only two 8-bit accumulators to reside in a register, leaving space for potential recomputations, and therefore half of the register space is wasted in the best-case scenario without any recomputations.

In practice, softmax probability distributions tend to be highly concentrated for pre-trained models, and we thus expect the recomputations to be sparse.
For a ``compact'' layout, we propose to separate 8-bit and 16-bit storage in the register space.
Namely, sufficient registers are reserved for the entire low-precision $\Matrix{Y}$ with four 8-bit accumulators per register without padding.
Furthermore, we require additional register space to store a single high-precision sub-block of $\Matrix{Y}$.
In those instances where LampAttention marks two or more sub-blocks for recomputation, the algorithm should branch to a sequential, fully 16-bit update of $\Matrix{O}_w$, reusing the registers for $\lfloor N_k/2 \rfloor + 1$ high-precision sub-blocks.

\paragraph{Sources of speedup.}
The hardware-algorithm co-design of LampAttention serves to maximize the speedup relative to vanilla FlashAttention-2.
Independent of the register layout, the primary latency reduction stems from the exponentials: evaluating an 8-bit exponential in one clock cycle via a LUT is drastically more efficient than routing a high-precision value through a multi-cycle SFU.\footnote{The evaluation of exponentials is one of the two main bottlenecks addressed in FlashAttention-4 \citep{zadouri2026flashattention}, which bypasses the SFU by proposing a low-degree polynomial approximation mapped to ALUs. In contrast, we resolve this natively without occupying either the SFU or ALU.}
This operation-level efficiency is present in both the ``safe'' and ``compact'' layouts.
At the same time, the identification of salient sub-blocks and their recomputation introduces additional latency; while we purposefully omit warp-group specialization in the present paper, it could be used to mask this latency overhead \citep{shah2024flashattention}.

In contrast, the ``compact'' layout improves computational throughput by reducing register pressure.
Packing four 8-bit accumulators into a single 32-bit register reduces the accumulator footprint by 75\%.
This enables a two-step hardware optimization: first, tile dimensions can be scaled up to the physical SRAM limit to minimize HBM reads; second, the remaining register savings increase warp occupancy to better hide memory-transfer latency.
Provided that ``disruptive'' recomputations of two or more sub-blocks remain infrequent, this dual scaling can potentially yield consistent speedups.

\section{Numerical experiments}
\label{sec:experiments}
\subsection{Setup and implementation}
We implement LampAttention as a Triton kernel with simulated 8-bit and 16-bit arithmetic.
Namely, the corresponding numbers are stored in FP32 with mantissas truncated to 3 and 11 bits, respectively.
The low-precision accumulation of ``atomic'' matrix products is carried out via recursive summation with FMA, i.e., based on $\mathrm{round}(c + a \cdot b)$, where the scalar multiplication and addition are in FP32.
Low-precision exponentials are modeled as correctly rounded, $\mathrm{round}(\exp a)$.
The tile sizes are fixed as $b_q N_q = 128$ and $b_k N_k = 64$ with $N_k = 4$ key sub-blocks.

To validate the performance of LampAttention, we evaluate it on models from the Qwen3 (8B, 30B-MoE, 32B) and Gemma 3 (12B, 27B) families.
The LampAttention kernel is dynamically injected into the standard HuggingFace implementations at runtime.
We compare this computation against two fixed-precision baselines: the vanilla 32-bit baseline and an 8-bit LampAttention baseline restricted from utilizing any 16-bit recomputations. 
We stress that these bit-widths refer strictly to the attention-logit accumulators and their exponentials, rather than the quantization of model weights.
Task metrics are measured using the EleutherAI lm-evaluation-harness.

\ifpreprint
The code used for the experiments is publicly available.\footnote{https://github.com/sbudzinskiy/lamp-attention}
\else
The code and the raw numerical results (including those not reported in the paper) are provided in the Supplementary Material.
\fi
The experiments were performed on a cloud-based GPU equipped with 80 GB of VRAM and required approximately 540 GPU-hours.

\subsection{Numerical results}
Tables~\ref{tab:c4} and~\ref{tab:mmlu} show that the adaptive 16-bit recomputations of LampAttention substantially improve upon the pure 8-bit baseline across both perplexity (C4) and downstream (MMLU) tasks; two more tasks are evaluated in Appendix~\ref{appendix:wiki-arc}.
Even with a high threshold of $\tau = 0.5$ in \eqref{eq:lamp_block}, i.e., with a small number of recomputations during the second stage of Algorithm~\ref{alg:lampattn}, the recovered metrics broadly approach the 32-bit vanilla baselines (although the degree of closeness remains model- and task-dependent).
Crucially, the recovery is achieved with only about 20-30\% of key tiles being ``disruptive'' for the ``compact'' layout, requiring the recomputation of two or more sub-blocks.

\begin{table}[tbhp]
\centering
\caption{LampAttention with $\delta = 2^{-8}$ and $\tau = 2^{-1}$ on C4.}
\label{tab:c4}
\begin{tabular}{lcccccc}
\toprule
& \multicolumn{3}{c}{\textbf{Perplexity} $\bm{(\downarrow)}$} & \multicolumn{3}{c}{\textbf{16-bit sub-blocks per tile}} \\
\cmidrule(lr){2-4}
\cmidrule(lr){5-7}
\textbf{Model name} & 8-bit & \textbf{8/16-bit LAMP} & 32-bit & 0 & 1 & 2+ \\
\midrule
gemma-3-27b-pt & 21.470 & \textbf{17.150} & 16.540 & 46.30\% & 20.52\% & 33.18\% \\
gemma-3-12b-pt & 36.740 & \textbf{21.830} & 18.680 & 50.94\% & 20.30\% & 28.76\% \\
\midrule
Qwen3-32B & 28.180 & \textbf{23.740} & 23.800 & 65.09\% & 15.43\% & 19.48\% \\
Qwen3-8B & 44.030 & \textbf{33.780} & 33.340 & 66.19\% & 14.58\% & 19.23\% \\
\midrule
Qwen3-30B-A3B & 43.900 & \textbf{28.270} & 27.510 & 69.75\% & 13.71\% & 16.54\% \\
\bottomrule
\end{tabular}
\end{table}

\begin{table}[tbhp]
\centering
\caption{LampAttention with $\delta = 2^{-8}$ and $\tau = 2^{-1}$ on 5-shot MMLU.}
\label{tab:mmlu}
\begin{tabular}{lcccccc}
\toprule
& \multicolumn{3}{c}{\textbf{Accuracy} $\bm{(\uparrow)}$} & \multicolumn{3}{c}{\textbf{16-bit sub-blocks per tile}} \\
\cmidrule(lr){2-4}
\cmidrule(lr){5-7}
\textbf{Model name} & 8-bit & \textbf{8/16-bit LAMP} & 32-bit & 0 & 1 & 2+ \\
\midrule
gemma-3-27b-pt & 0.6261 & \textbf{0.7851} & 0.7993 & 48.49\% & 21.55\% & 29.96\% \\
gemma-3-12b-pt & 0.3618 & \textbf{0.5899} & 0.7654 & 53.39\% & 22.19\% & 24.42\% \\
\midrule
Qwen3-32B & 0.7555 & \textbf{0.8355} & 0.8344 & 54.52\% & 19.67\% & 25.81\% \\
Qwen3-8B & 0.5625 & \textbf{0.7412} & 0.7840 & 54.05\% & 19.11\% & 26.84\% \\
\midrule
Qwen3-30B-A3B & 0.5998 & \textbf{0.7884} & 0.8169 & 56.97\% & 18.99\% & 24.04\% \\
\bottomrule
\end{tabular}
\end{table}

Furthermore, Figure~\ref{fig:pareto_acc_2x2} depicts the empirical efficiency-accuracy trade-off of LampAttention across varying thresholds.
As $\tau$ decreases, task performance converges toward the 32-bit vanilla baseline, though non-monotonically for smaller values of $\tau$.
This observation indicates that enforcing higher precision on intermediate computations does not unconditionally guarantee superior metrics; in fact, Table~\ref{tab:arc0} in Appendix~\ref{appendix:wiki-arc} illustrates an instance where the 8-bit baseline accuracy exceeds the 32-bit baseline.
In contrast, the rate of ``disruptive'' recomputations increases monotonically as $\tau$ tends to zero, reflecting the increasingly stringent constraints of the block-LAMP optimization problem \eqref{eq:lamp_block}.

Figure~\ref{fig:pareto_acc_2x2} also provides a new perspective on the results in Tables~\ref{tab:c4}-~\ref{tab:mmlu}.
We observe that the empirical difference between $\tau = 1$ and $\tau = 0.5$ is marginal in terms of both recomputation rate and accuracy.
Because the optimal solution to \eqref{eq:lamp_block} at $\tau = 1$ strictly omits stage-two recomputations, we can deduce the structural division of labor within the algorithm:
\begin{itemize}
    \item the first, running-maximum stage of LampAttention is responsible for the ``heavy lifting'' of the evaluation metrics from the 8-bit baseline toward the 32-bit baseline using a moderate budget of ``disruptive'' recomputations for the compact register layout;
    \item the second stage, solving \eqref{eq:lamp_block}, acts as a fine-grained correction mechanism that closes the final metric gap at the cost of additional recomputations.
\end{itemize}

Finally, Figure~\ref{fig:bars_1x2} visualizes the dynamics of the recomputation cascade as the threshold $\tau$ tightens: the mass of zero-recomputation cases systematically transfers into full-recomputation cases.

Additional experimental results are presented in the Appendix.

Our findings highlight the potential utility of incorporating the new mixed-precision paradigm, the LAMP, into FlashAttention.
They expose a controllable trade-off between inference efficiency and downstream performance without resorting to the 32-bit arithmetic for the computationally intensive sections of the algorithm.

\begin{figure}[tbhp]
    \centering
    \includegraphics[width=\linewidth]{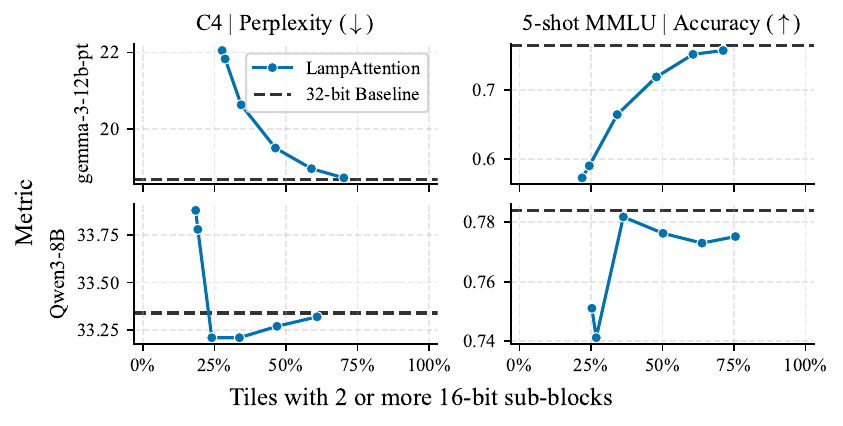}
    \caption{LampAttention with $\delta = 2^{-8}$ and $\tau \in \{ 2^{-t}~:~t = 0, \ldots, 5 \}$.}
    \label{fig:pareto_acc_2x2}
\end{figure}

\begin{figure}[tbhp]
    \centering
    \includegraphics[width=\linewidth]{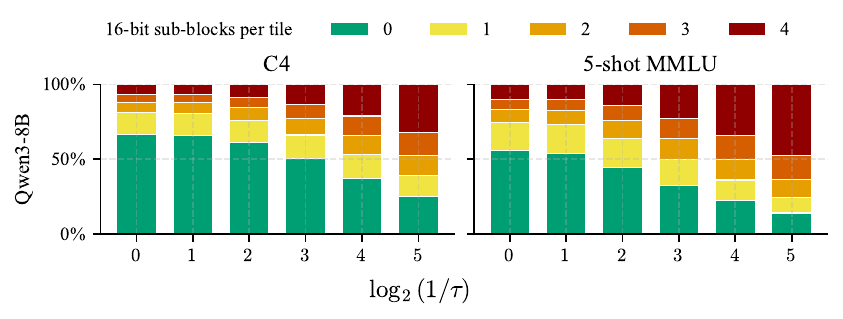}
    \caption{LampAttention with $\delta = 2^{-8}$ and $\tau \in \{ 2^{-t}~:~t = 0, \ldots, 5 \}$.}
    \label{fig:bars_1x2}
\end{figure}

\ifpreprint
\else
\clearpage
\fi

\ifpreprint
\section*{Author Contributions}
SB conceived the approach, formulated the research problem, and carried out the formal analysis, experimentation, and implementation. SB wrote the original draft of the manuscript.
MG and TY contributed to the analysis and implementation and assisted with proofreading. YHT contributed to project administration and to reviewing and editing the manuscript.
YL, WF, and FW contributed to project administration.
PP supervised the project as laboratory head and reviewed the manuscript.

\section*{Acknowledgments}
This work was carried out in the framework of a research project funded by Huawei Technologies Ltd.
\else
\section*{Ethics statement}
This paper presents work whose goal is to advance the field of machine learning by improving the efficiency of large language models. Our method has the potential to reduce the energy consumption required to deploy these models, contributing to “Green AI” initiatives.

\section*{Reproducibility statement}
To ensure full reproducibility of the results, we have included in the Supplementary Material the complete source code, the configurations of the Python environment, the exact evaluation script, and hyperparameters used during the experiments.
\fi

\section*{AI use statement}
In this work, we used generative AI tools to assist with software development and to refine the clarity and flow of the text. All core research ideation, theoretical development, experimental design, and data analysis were conducted entirely by the authors. We have reviewed all AI-assisted work and assume full responsibility for the accuracy, integrity, and originality of this work.

\bibliography{main}
\bibliographystyle{iclr2027_conference}

\clearpage
\appendix
\section{Proof of Proposition~\ref{proposition:error_propagation}}
\label{appendix:proof}
The $\ell_1$ normalization function $\eta$ is differentiable when every entry of $\Matrix{z}$ is nonzero; in our case, they are positive.
With $\Matrix{\nrmone} = \Nrmone{\Matrix{z}}$, direct calculation yields
\begin{equation*}
    \Matrix{A} = \Matrix{J}_{\nrmone}(\Matrix{z}) \diag{\Matrix{z}} \diag{\Matrix{h}} = (\Matrix{I} - \Matrix{\nrmone} \One^\trans) \diag{\Matrix{\nrmone}} \diag{\Matrix{h}}.
\end{equation*}
Since $\nrmone_i \in (0,1)$ for each $i$, the diagonal entries of $\Matrix{A}$ are nonnegative, and hence
\begin{equation*}
    |\Matrix{e}_l^\trans \Matrix{A}| \hat{\Matrix{w}} \leq \Matrix{e}_l^\trans \Matrix{A} \One \Norm{\hat{\Matrix{w}}}{\infty} =  \Big[ \nrmone_l \sum_{i \neq l} \nrmone_i h_i + (1 - \nrmone_l) \nrmone_l h_l \Big] \Norm{\hat{\Matrix{w}}}{\infty}
    = \nrmone_l \Big[ \Matrix{h}^\trans \Matrix{\nrmone} + h_l (1 - 2 \nrmone_l) \Big] \Norm{\hat{\Matrix{w}}}{\infty}.
\end{equation*}

\section{Additional benchmarks}
\label{appendix:wiki-arc}
In addition to the C4 and MMLU tasks presented in the main text, we have evaluated the performance of LampAttention on Wikitext and ARC-Challenge; see Tables~\ref{tab:wiki} and~\ref{tab:arc0}.
Notably, the 8-bit accuracy baseline of Qwen3-32B on ARC-Challenge is higher than the corresponding 32-bit baseline, indicating that there is no strict causal relationship between higher-precision intermediate calculations and higher final accuracy.
Figure~\ref{fig:bars_5x4} is an extension of Figure~\ref{fig:bars_1x2} that contains all models and tasks.

\begin{table}[htbp]
\centering
\caption{LampAttention with $\delta = 2^{-8}$ and $\tau = 2^{-1}$ on Wikitext.}
\label{tab:wiki}
\begin{tabular}{lcccccc}
\toprule
& \multicolumn{3}{c}{\textbf{Perplexity} $\bm{(\downarrow)}$} & \multicolumn{3}{c}{\textbf{16-bit sub-blocks per tile}} \\
\cmidrule(lr){2-4}
\cmidrule(lr){5-7}
\textbf{Model name} & 8-bit & \textbf{8/16-bit LAMP} & 32-bit & 0 & 1 & 2+ \\
\midrule
gemma-3-27b-pt & 8.6290 & \textbf{6.2100} & 5.9000 & 47.21\% & 19.86\% & 32.93\% \\
gemma-3-12b-pt & 15.940 & \textbf{8.6370} & 7.3790 & 52.86\% & 19.10\% & 28.04\% \\
\midrule
Qwen3-32B & 12.170 & \textbf{9.4290} & 9.3610 & 76.50\% & 11.55\% & 11.95\% \\
Qwen3-8B & 17.480 & \textbf{12.510} & 12.300 & 76.44\% & 11.01\% & 12.55\% \\
\midrule
Qwen3-30B-A3B & 18.780 & \textbf{11.260} & 10.890 & 81.82\% & 9.02\% & 9.16\% \\
\bottomrule
\end{tabular}
\end{table}

\begin{table}[htbp]
\centering
\caption{LampAttention with $\delta = 2^{-8}$ and $\tau = 2^{-1}$ on 0-shot ARC-Challenge.}
\label{tab:arc0}
\begin{tabular}{lcccccc}
\toprule
& \multicolumn{3}{c}{\textbf{Accuracy} $\bm{(\uparrow)}$} & \multicolumn{3}{c}{\textbf{16-bit sub-blocks per tile}} \\
\cmidrule(lr){2-4}
\cmidrule(lr){5-7}
\textbf{Model name} & 8-bit & \textbf{8/16-bit LAMP} & 32-bit & 0 & 1 & 2+ \\
\midrule
gemma-3-27b-pt & 0.5977 & \textbf{0.6484} & 0.6562 & 48.47\% & 30.41\% & 21.12\% \\
gemma-3-12b-pt & 0.4570 & \textbf{0.6191} & 0.6387 & 48.71\% & 32.21\% & 19.08\% \\
\midrule
Qwen3-32B & 0.6191 & \textbf{0.6016} & 0.6094 & 48.54\% & 31.22\% & 20.24\% \\
Qwen3-8B & 0.4766 & \textbf{0.5352} & 0.5605 & 48.24\% & 28.71\% & 23.05\% \\
\midrule
Qwen3-30B-A3B & 0.4512 & \textbf{0.5176} & 0.5762 & 48.28\% & 28.94\% & 22.78\% \\
\bottomrule
\end{tabular}
\end{table}

\begin{figure}[tbhp]
    \centering
    \includegraphics[width=\linewidth]{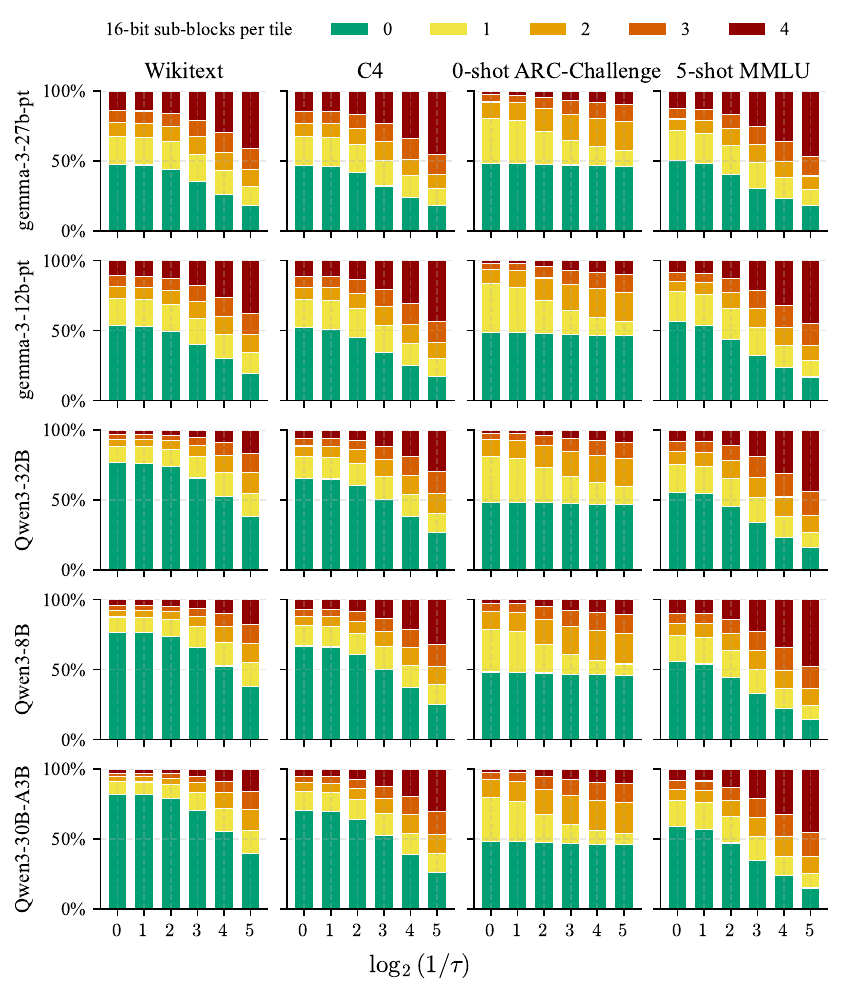}
    \caption{LampAttention with $\delta = 2^{-8}$ and $\tau \in \{ 2^{-t}~:~t = 0, \ldots, 5 \}$.}
    \label{fig:bars_5x4}
\end{figure}

\clearpage
\section{Impact of safety margin}
\label{appendix:margin}
Here, we compare the performance of LampAttention with two different values of the safety margin $\delta$ from the first stage of the algorithm.
As Figures~\ref{fig:pareto_ppl_5x2} and~\ref{fig:pareto_acc_5x2} show, larger $\delta$ results in more ``disruptive'' recomputations being initiated during the first stage (which $\tau = 1$ isolates).
At the same time, the two trade-off curves exhibit similar convergent behavior as the threshold $\tau$ tightens.

\begin{figure}[tbhp]
    \centering
    \includegraphics[width=\linewidth]{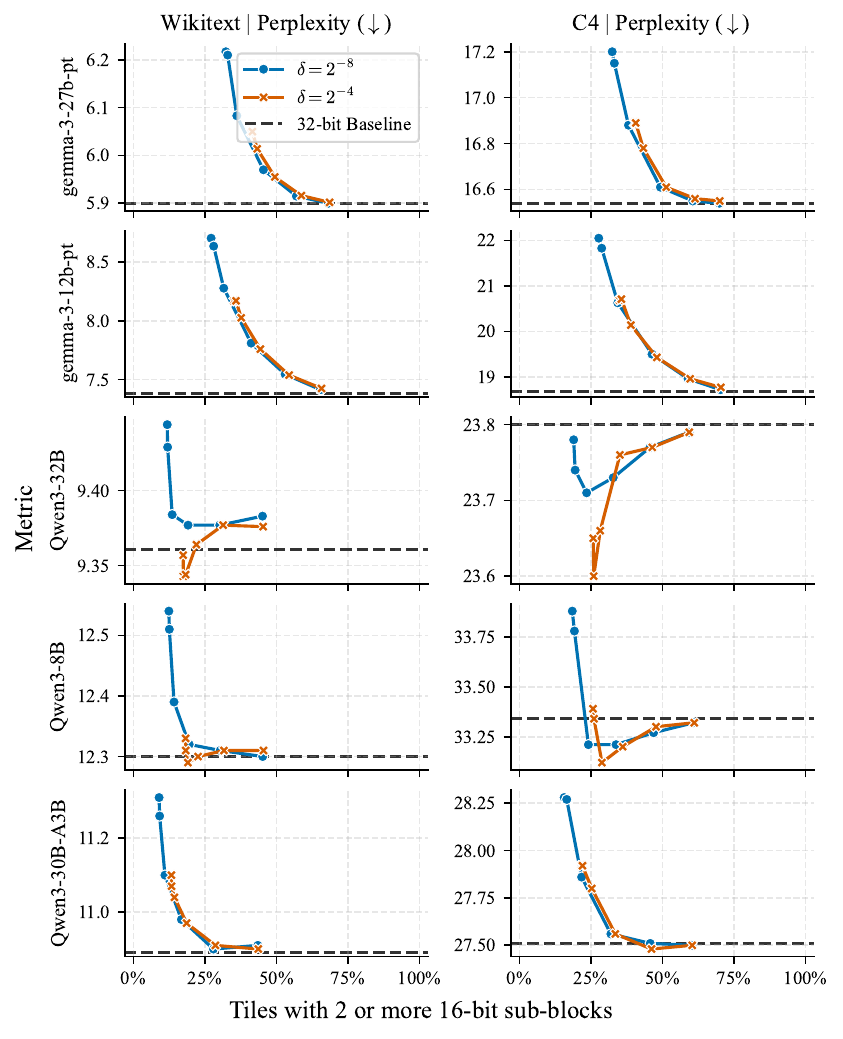}
    \caption{LampAttention with $\delta \in \{ 2^{-8}, 2^{-4} \}$ and $\tau \in \{ 2^{-t}~:~t = 0, \ldots, 5 \}$.}
    \label{fig:pareto_ppl_5x2}
\end{figure}

\begin{figure}[tbhp]
    \centering
    \includegraphics[width=\linewidth]{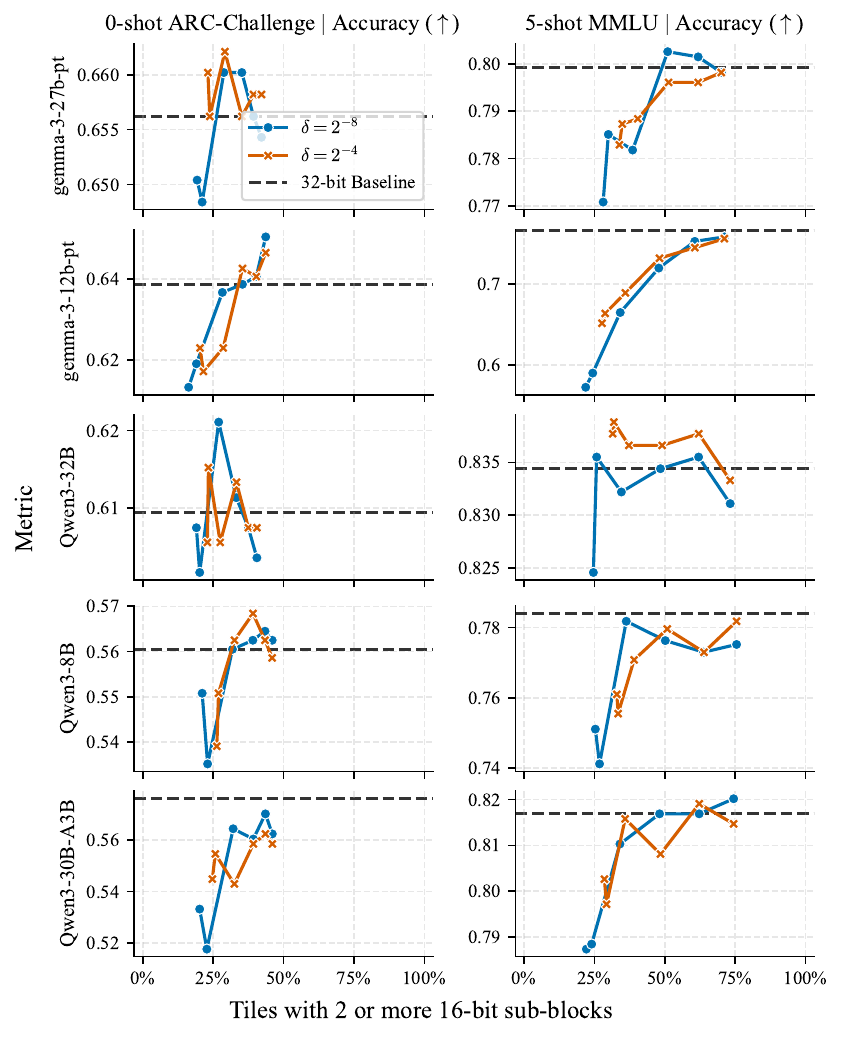}
    \caption{LampAttention with $\delta \in \{ 2^{-8}, 2^{-4} \}$ and $\tau \in \{ 2^{-t}~:~t = 0, \ldots, 5 \}$.}
    \label{fig:pareto_acc_5x2}
\end{figure}

\clearpage
\section{Impact of first-stage recomputation}
\label{appendix:stage}
While setting $\tau = 1$ effectively disables the second recomputation stage in Algorithm~\ref{alg:lampattn}, we can choose to disable the first stage instead (even though this violates the theoretical assumptions underlying the derivation of the block-LAMP problem \eqref{eq:lamp_block}).
Figures~\ref{fig:pareto_ppl_5x2_nomax} and~\ref{fig:pareto_acc_5x2_nomax} show the corresponding trade-off curves.
Taken together with the results in Appendix~\ref{appendix:margin}, they suggest that the specific mechanism triggering a recomputation is secondary; whether driven by the first-stage running maximum or the second-stage threshold, both phases identify fundamentally ``important'' sub-blocks.

\begin{figure}[tbhp]
    \centering
    \includegraphics[width=\linewidth]{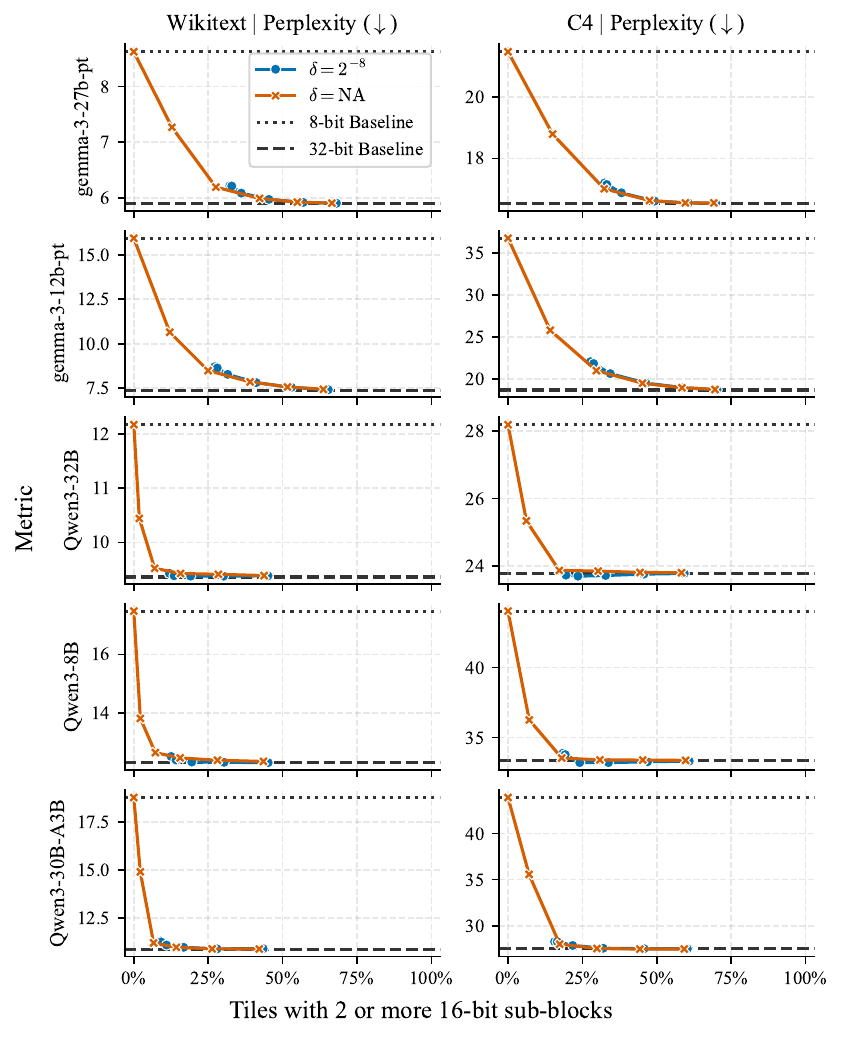}
    \caption{LampAttention with $\delta = 2^{-8}$ or disabled first stage, and $\tau \in \{ 2^{-t}~:~t = 0, \ldots, 5 \}$.}
    \label{fig:pareto_ppl_5x2_nomax}
\end{figure}

\begin{figure}[tbhp]
    \centering
    \includegraphics[width=\linewidth]{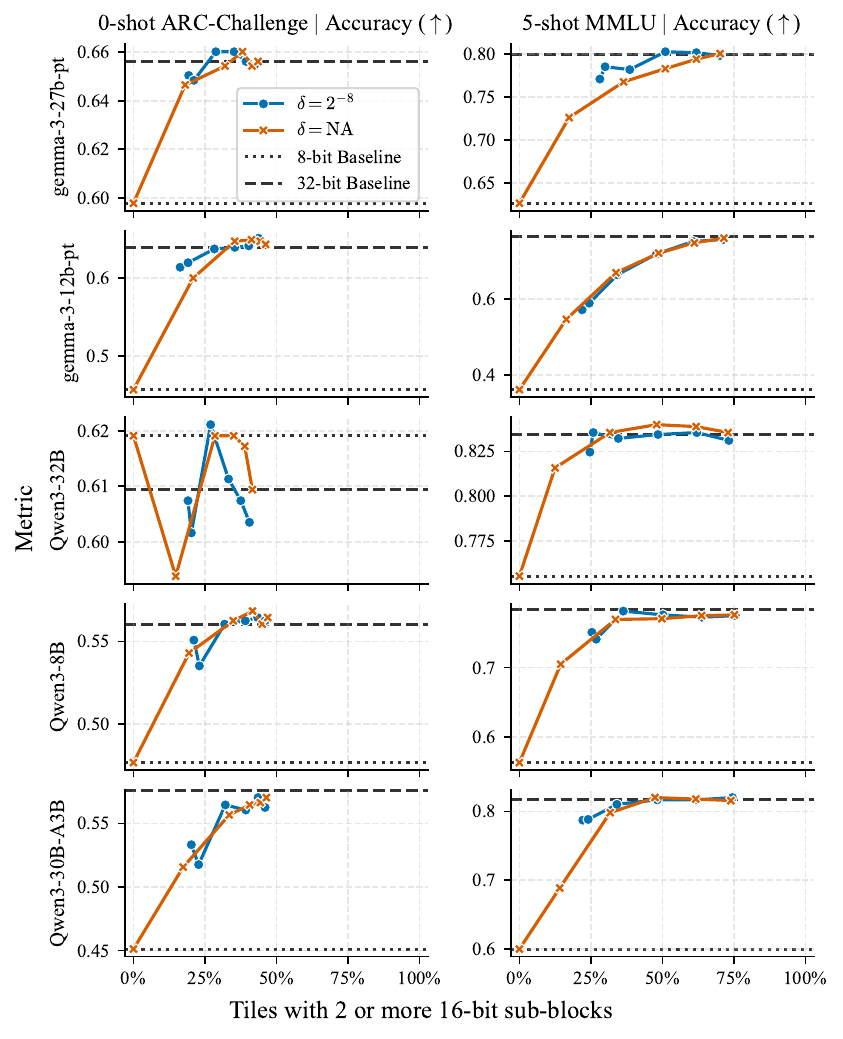}
    \caption{LampAttention with $\delta = 2^{-8}$ or disabled first stage, and $\tau \in \{ 2^{-t}~:~t = 0, \ldots, 5 \}$.}
    \label{fig:pareto_acc_5x2_nomax}
\end{figure}

\clearpage
\section{Impact of in-context learning}
\label{appendix:fewshot}
We assess how task familiarity affects the performance of LampAttention by comparing 0-shot and 25-shot evaluations on ARC-Challenge.
Tables~\ref{tab:arc25} and~\ref{tab:arc0_copy} (a reproduction of Table~\ref{tab:arc0} for convenience) demonstrate a dual benefit: the few-shot setting naturally increases the accuracy while simultaneously expanding the share of tiles processed entirely in 8-bit.
Interestingly, Figure~\ref{fig:bars_5x2_fewshot} reveals that the distribution of recomputed sub-blocks per tile evolves distinctively between the two task instances.

\begin{table}[htbp]
\centering
\caption{LampAttention with $\delta = 2^{-8}$ and $\tau = 2^{-1}$ on 25-shot ARC-Challenge.}
\label{tab:arc25}
\begin{tabular}{lcccccc}
\toprule
& \multicolumn{3}{c}{\textbf{Accuracy} $\bm{(\uparrow)}$} & \multicolumn{3}{c}{\textbf{16-bit sub-blocks per tile}} \\
\cmidrule(lr){2-4}
\cmidrule(lr){5-7}
\textbf{Model name} & 8-bit & \textbf{8/16-bit LAMP} & 32-bit & 0 & 1 & 2+ \\
\midrule
gemma-3-27b-pt & 0.6504 & \textbf{0.6953} & 0.7070 & 56.64\% & 19.37\% & 23.99\% \\
gemma-3-12b-pt & 0.4609 & \textbf{0.6289} & 0.6777 & 62.56\% & 19.31\% & 18.13\% \\
\midrule
Qwen3-32B & 0.6758 & \textbf{0.7305} & 0.7324 & 55.23\% & 19.55\% & 25.22\% \\
Qwen3-8B & 0.5488 & \textbf{0.6660} & 0.6660 & 54.10\% & 18.53\% & 27.37\% \\
\midrule
Qwen3-30B-A3B & 0.5879 & \textbf{0.7031} & 0.6953 & 57.72\% & 19.01\% & 23.27\% \\
\bottomrule
\end{tabular}
\end{table}

\begin{table}[htbp]
\centering
\caption{LampAttention with $\delta = 2^{-8}$ and $\tau = 2^{-1}$ on 0-shot ARC-Challenge.}
\label{tab:arc0_copy}
\begin{tabular}{lcccccc}
\toprule
& \multicolumn{3}{c}{\textbf{Accuracy} $\bm{(\uparrow)}$} & \multicolumn{3}{c}{\textbf{16-bit sub-blocks per tile}} \\
\cmidrule(lr){2-4}
\cmidrule(lr){5-7}
\textbf{Model name} & 8-bit & \textbf{8/16-bit LAMP} & 32-bit & 0 & 1 & 2+ \\
\midrule
gemma-3-27b-pt & 0.5977 & \textbf{0.6484} & 0.6562 & 48.47\% & 30.41\% & 21.12\% \\
gemma-3-12b-pt & 0.4570 & \textbf{0.6191} & 0.6387 & 48.71\% & 32.21\% & 19.08\% \\
\midrule
Qwen3-32B & 0.6191 & \textbf{0.6016} & 0.6094 & 48.54\% & 31.22\% & 20.24\% \\
Qwen3-8B & 0.4766 & \textbf{0.5352} & 0.5605 & 48.24\% & 28.71\% & 23.05\% \\
\midrule
Qwen3-30B-A3B & 0.4512 & \textbf{0.5176} & 0.5762 & 48.28\% & 28.94\% & 22.78\% \\
\bottomrule
\end{tabular}
\end{table}

\begin{figure}[tbhp]
    \centering
    \includegraphics[width=\linewidth]{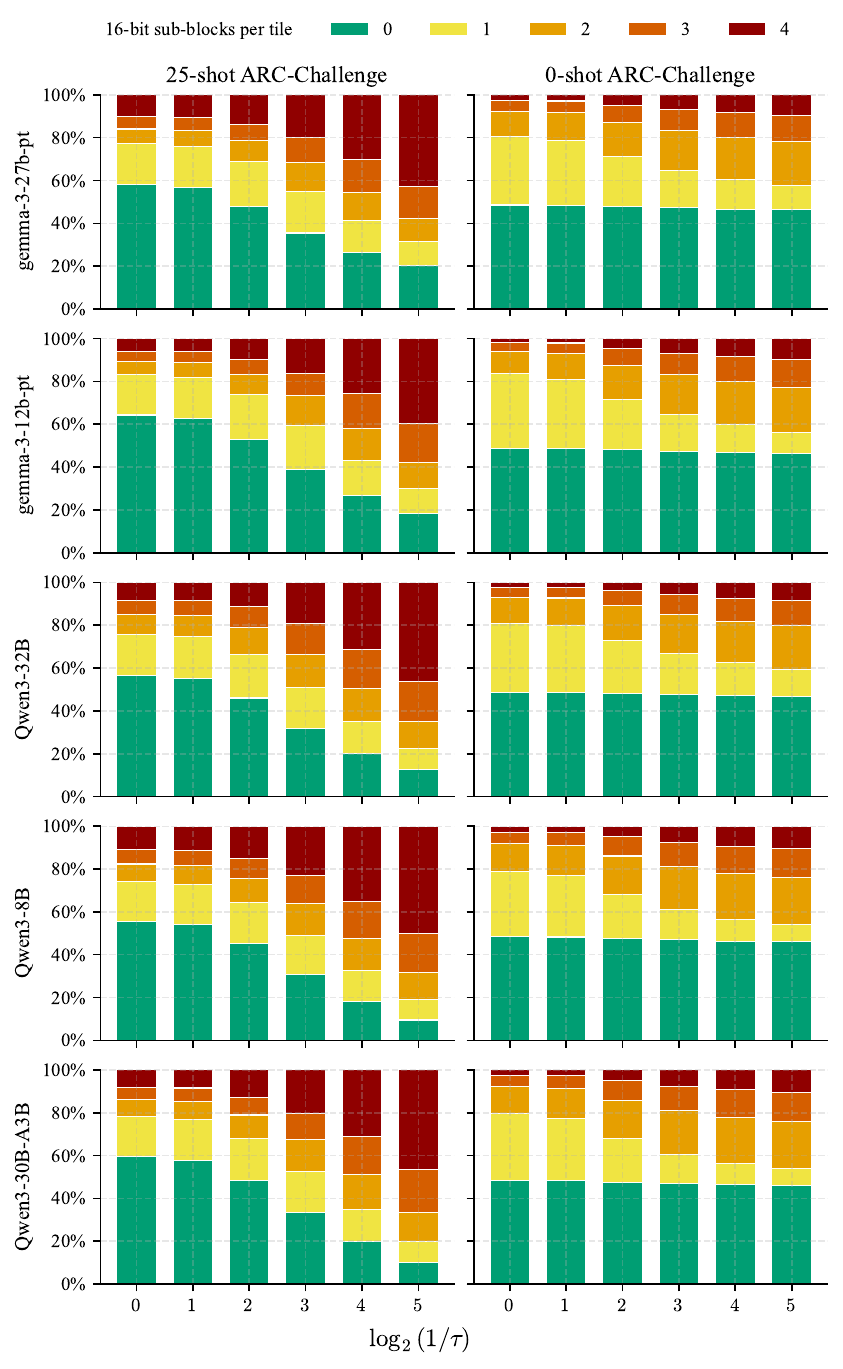}
    \caption{LampAttention with $\delta = 2^{-8}$ and $\tau \in \{ 2^{-t}~:~t = 0, \ldots, 5 \}$.}
    \label{fig:bars_5x2_fewshot}
\end{figure}

\end{document}